\documentclass{article}

\usepackage[preprint]{neurips_2026}

\usepackage[utf8]{inputenc} 
\usepackage[T1]{fontenc}    
\usepackage{hyperref}       
\usepackage{url}            
\usepackage{booktabs}       
\usepackage{amsfonts}       
\usepackage{nicefrac}       
\usepackage{microtype}      
\usepackage{xcolor}         
\usepackage{graphicx}
\usepackage{subcaption}

\usepackage[australian]{babel}
\usepackage{amssymb}
\usepackage{amsmath}
\usepackage{amsthm}
\usepackage{mathrsfs} 
\usepackage{textcomp} 

\usepackage{bm}
\usepackage{array}
\usepackage{mdwmath} 
\usepackage{mdwtab}
\usepackage[printonlyused,withpage]{acronym} 
\usepackage{mathabx} 
\usepackage{multirow} 
\usepackage[disable]{todonotes}

\newtheorem{theorem}{Proposition}

\newtheorem{definition}{Definition}

\providecommand{\set}[1]{\mathcal{#1}} 
\makeatletter
\def\imod#1{\allowbreak\mkern10mu({\operator@font mod}\,#1)}
\makeatother

\makeatletter
\renewcommand*{\fps@figure}{htb}
\renewcommand*{\fps@table}{htb}
\makeatother
\newcommand{\Johnie}{Johnie}
\newcommand{\Neurocell}{Neurocell}

\title{Von Neumann Networks}

\author{Shekhar S. Chandra\\
	School of Electrical Engineering and Computer Science,\\
	The University of Queensland, Australia.\\
	\texttt{shekhar.chandra@uq.edu.au} }

\begin{document}

\maketitle


\begin{figure}[h]
	\centering
	\includegraphics[width=0.9\textwidth]{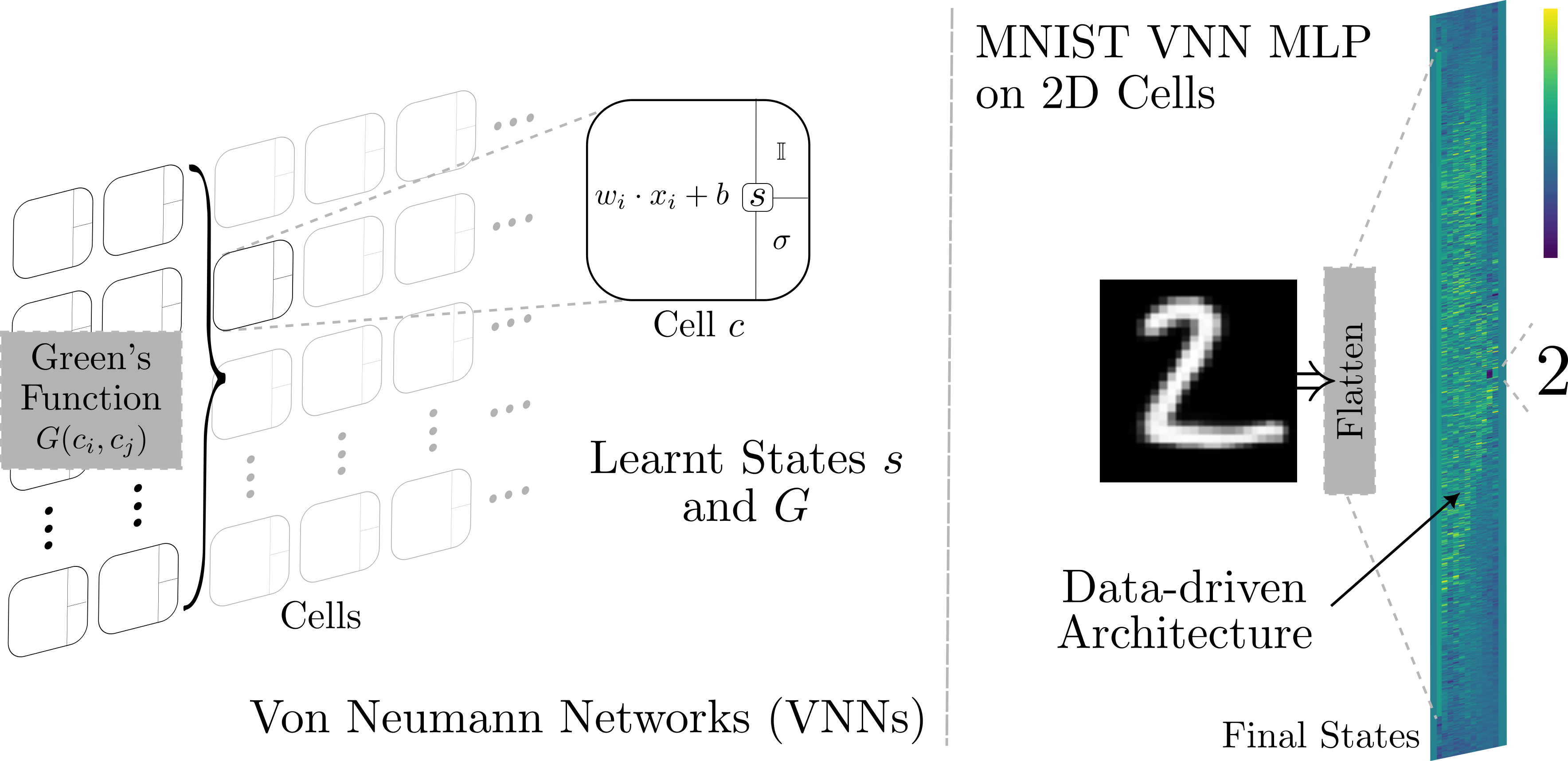}
	\caption{Inspired by the works of John von Neumann, we show that with a suitable state-based artificial neuron as a cell and neural operators implemented as convolutional Green's functions on cellular arrays result in neural networks we call \aclp{VNN} capable of self-engineering its own high-dimensional connectivity and architecture entirely embedded within the cellular array.}
	\label{fig::Overview}
\end{figure}

\begin{abstract}
 	In the mid-twentieth century, mathematician and polymath John von Neumann created a computational system on an array of cells as a simple model of the human brain, where each cell had one of a finite set of roles or states that he predicted would be modelled by a diffusion process. In this work, we show that such a system, when developed in a modern deep learning setting, enables the construction of an artificial neuron having specialized roles that can be learnt. We refer to this neuron as the Von Neumann neuron, and the resulting neural network from such neurons result in a self-engineered design whose architecture is only dependent on the structure and locations of its inputs and outputs on this cellular array. The mathematical framework for these \acp{VNN} is also constructed and shows that they are based on the extension of neural operators and the learning of Green's functions with convolutions on a cellular topology having a diffusion signature. We also prove that these \acp{VNN} are part of a more general computational system called \aclp{CM} that are computationally universal. Initial experiments show that \ac{VNN} based \aclp{MLP} outperform their equivalent deep learning variant on basic tasks, while being more parameter efficient and are capable of learning new types of tasks.
\end{abstract}

\acresetall
\section{Introduction}\label{sec::Introduction}
In the mid-twentieth century, John von Neumann (or \Johnie~as he insisted~\citep{bhattacharya_man_2021}) envisioned a computational system on an infinite array of square cells to mimic the human brain. \Johnie~was motivated in understanding how the human brain functions and could be modelled as a digital machine~\citep[pg. 43]{neumann_computer_1958} and these simplified cellular structures were designed to approximate this model of a digital brain~\citep[pg. 93]{von_neumann_theory_1966}. His key insight in this model was that these cells could make decisions like neurons in the brain but would also maintain states depending on their purpose and then used it to construct a machine capable of reproducing itself~\citep[published post-humorously]{von_neumann_theory_1966}. This theory of self-replicating machines is one of the founding works of \ac{CA}, in which simulations evolve the state of each cell in this array via a set of simple rules based on current states and their neighbours~\citep{sipper_fifty_1998}. He would eventually propose a continuous version of the model that would be ``a system of non-linear \acp{PDE}, essentially of the diffusion type''~\citep[pg. 95]{von_neumann_theory_1966} but he would not be able to write up this continuous model before his death in 1957.

In recent years, deep learning has resolved many problems associated with artificial neural networks and has been used to solve a large number of important problems~\citep{lecun_deep_2015} including more recently in language~\citep{vaswani_attention_2017}, chemistry~\citep{jumper_highly_2021} and computer vision~\citep{krizhevsky_imagenet_2012,he_deep_2016,liu_convnet_2022, rao_gfnet_2023}.

In what follows, we present a novel computational model on arrays of square cells that we prove to be computationally universal and result in a new neural network framework capable of learning their own architectures and connectivity. This cellular model is based on a novel artificial neuron based on \Johnie's cell design that maintains its own role or state modernized for the deep learning era. The proposed \Neurocell~framework simulates the interaction of neurons with extensions to neural operators and learning neuron signal propagators known as Green's functions (equivalently \acp{PSF}) to cellular topologies with convolutional operators in order to propagate their impulses. The framework then enables each neuron's (continuous) state be optimized through stochastic gradient descent, thereby allowing cells to learn their own (continuous \ac{CA}-like) rules of evolution. The result is a discrete system that integrates a partial differential equation of diffusion type analogous to what \Johnie~predicted and models neural networks as structures embedded within these cells in what we will call \acp{VNN}. In the following sections, we review similar works to the proposed framework, its mathematical framework using Green's functions and results of experiments that show flexible networks capable of learning models of different shapes and tasks, including that of a model that can simulate an \ac{ALU} that is a core component of a modern computer's \ac{CPU}.


\section{Related Works}
\Johnie's primary motivation for a cellular system was to mimic the human brain with a simple computational model via cells representing neurons that had states enabling different roles. \Johnie~would eventually develop a cellular model that implements a machine with the ability to replicate itself by designing each cell in his system to have 29 different states for this replication process~\citep[pg. 132]{von_neumann_theory_1966}. ~\citet{codd_cellular_1968} would simplify the number of states from Johnie's 29 to just 8 including the important `signal' state that passes on signals within the array corresponding to the computation. Conway would show that a cellular system with significant complexity and capability of universal computation could be achieved with just two states, although self-replication has not been shown yet~\citep{gardner_mathematical_1970}. Since \Johnie's initial proposal of a 29 state self-replicating machine, majority of advancements have been largely focused on simplifying the number of states required, while creating various rule sets of \ac{CA} that produce Turing complete systems; see a brief review of this area provided in Appendix~\ref{sec::review-replication}.

After the development of traditional \ac{CA} approaches such as the \ac{GoL}~\citep[see Appendix~\ref{sec::GoL} for a brief introduction]{gardner_mathematical_1970, wolfram_statistical_1983, langton_self-reproduction_1984} that have hand-crafted rules, various attempts have been made to learn \ac{CA} rules.~\citet{chua_cellular_1988} introduced \acp{CeNN} that use hand-crafted templates and solutions to ordinary differential equations to create their networks prior to the deep learning era (see \citet{chua_cellular_1998} for an introduction). They would be capable of learning global structures for pattern creation as long as a suitable template for local connectivity was found. The templates usually have 19 parameters for a 3x3 neighborhood and would eventually be learnt through genetic algorithms and backpropagation algorithms from recurrent networks~\citep{chua_cellular_1998}. These \acp{CeNN} would later be extended to provide Turing completeness of their models~\citep{roska_cnn_1993}.

Neural operators were recently specifically introduced to learn the functions responsible for differentiating and integrating within networks~\citep{kovachki_neural_2023}. Additional work has shown that these operators can be computed efficiently using convolutions and Fourier space~\citep{li_fourier_2020} and recently showing that Green's functions themselves can be learnt~\citep{yoo_neural_2025}. Other works have attempted to improve neural networks by extending individual concepts critical in deep learning including the use of dynamic information routing~\citep{sabour_dynamic_2017} and learning of activation functions on network edges~\citep{liu_kan_2024}.

The recent work of~\citet{mordvintsev_growing_2020} has involved applying modern deep learning models to learn the \ac{CA} rules as a \ac{MLP} to generate specific patterns across its cells. Since it is impossible to know \emph{a priori} if a pattern produces complex behaviour without first simulating it in \acp{CA} like \ac{GoL}, generating patterns for applications is highly desirable. In their formulation however, each cell contains an \ac{MLP} that processes the neighbourhood and learns to produce a particular target pattern or image. These neural \ac{CA} together with morphogensis originally proposed by~\citet{turing_chemical_1990} are very well suited for generating textures on images~\citep{mordvintsev_nca_2021} and meshes~\citep{pajouheshgar_mesh_2024}. Recently, \citet{miotti_differentiable_2025} have even proposed using these neural \ac{CA} to represent logic systems. Recent work on differential logic gates has made advances in learning kernels using logic gates~\citep{petersen_deep_2022}. These logic gate networks have recently been combined with neural \ac{CA} to learn rules for \ac{CA} including the rules \ac{GoL}~\citep{miotti_differentiable_2025}.

Learning of architectures is also not new with the advent of \ac{NAS} in deep learning over the last decade especially when using reinforcement learning~\citep{zoph_neural_2017}. In general, these works optimize for the architecture either by using a more general pre-existing architecture such as trusses~\citep{liu_auto-deeplab_2019}, supernets~\citep{lu_neural_2021} or fabrics~\citep{saxena_convolutional_2016}) of inter-connected layers to select the optimal one or picking layers as needed via tree search~\citep{wang_sample-efficient_2021} (see~\citep{wang_advances_2024} for a detailed review).

Various attempts have been made to add memory and states to deep learning, but these have had largely unstable training, although they were able to learn to solve simple problems~\citep{graves_neural_2014, graves_hybrid_2016}. A few attempts have been made to produce learnable \ac{CA} including CoDI~\citep{gers_codi-1bit_1998} that relies on spiking networks and genetic encoding to simulate its network.

In this work, we show that our proposed \Neurocell~framework is a modern deep learning interpretation of \acp{CeNN} that is an extension of neural operators to a cellular array topology, in which learning Green's functions is simplified to regular convolution operations that known to be easier to optimize than differentiable logic gates. The resulting \acp{VNN} will be shown to generalize deep learning as cellular machines capable of learning its own architecture in the following sections, or equivalently its own \ac{CA} rules and of universal computation with a direct mapping to a tape based self-engineered \ac{TM}.

\section{Methods}
In this section, the goal is to create a modern neural network framework, but on arrays of cells as John von Neuman originally intended. This involves creating a deep learning like neuron, but one that maintains its own state or role within the network. We prove that a a special case of this formulation is equivalent to that of deep learning but on array of cells. We extend this approach so that cells have learnable states and signal propagation letting them decide their role in the network (e.g. whether to activate or not etc.) and create their own architectures. We then extend how signals propagate within the array allowing different neural operators to be learnt dictating how layers of cells can mix and connect in arbitrary dimensions. We conclude with a few proofs of the computational universality of this proposed \Neurocell~framework and formulate a few conjectures relating to its theory.

\subsection{Von Neumann Neuron}\label{sec::neuron}
The conventional deep learning artificial neuron has evolved from~\citet{mcculloch_logical_1943} style design shown in Figure~\ref{fig::nn}.
\begin{figure}[ht!]
	\centering
	\begin{subfigure}[b]{0.3\textwidth}
		\includegraphics[width=\textwidth]{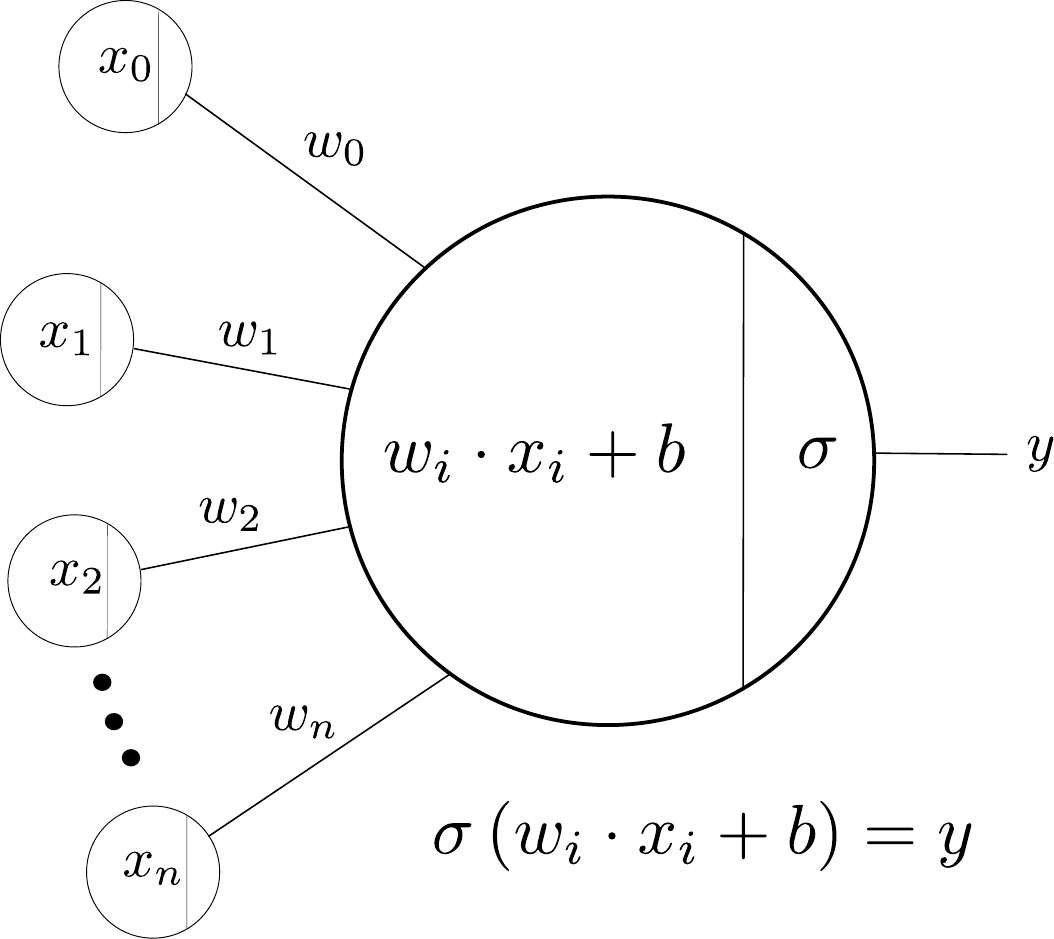}
		\caption{Deep learning neuron.}
		\label{fig::nn}
	\end{subfigure}
	\hfill 
	\begin{subfigure}[b]{0.3\textwidth}
		\includegraphics[width=\textwidth]{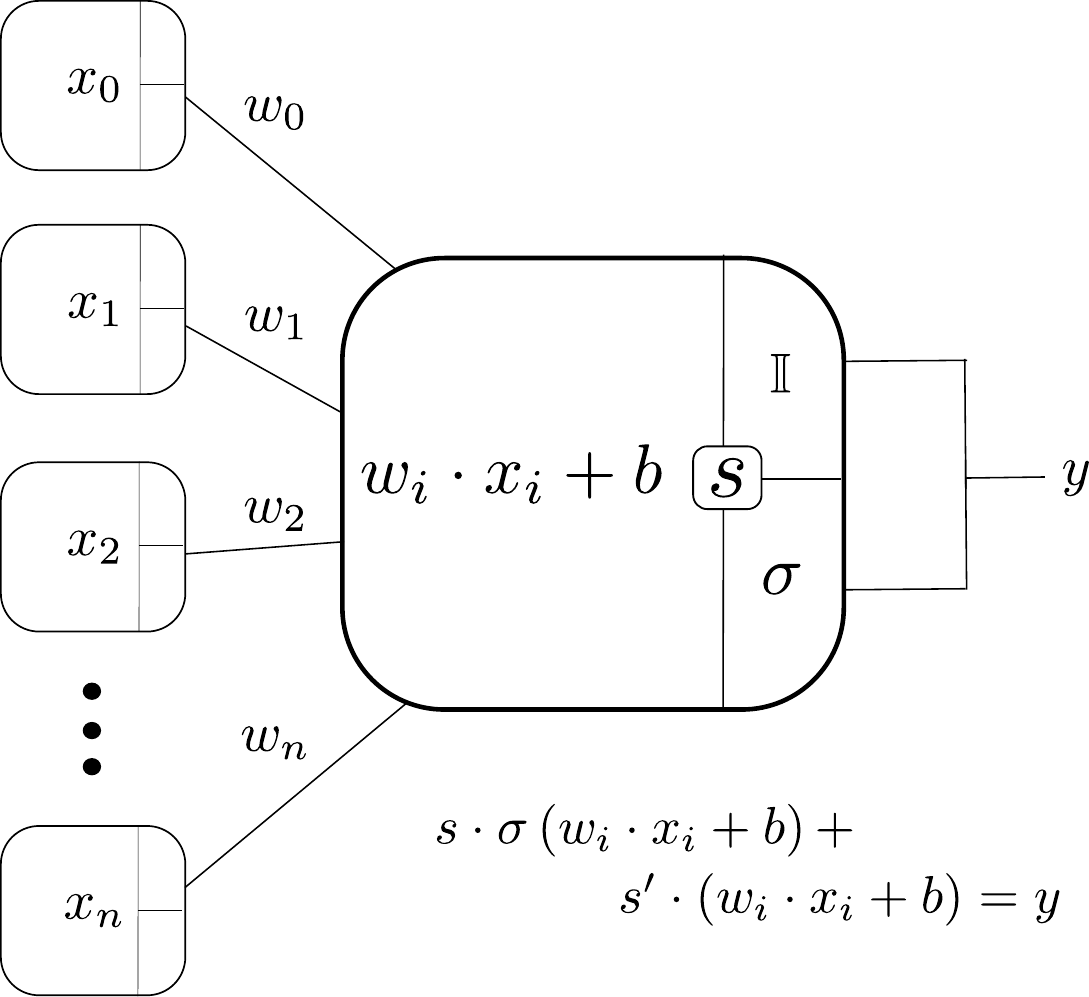}
		\caption{Proposed neuron.}
		\label{fig::vnn}
	\end{subfigure}
	\hfill 
	\begin{subfigure}[b]{0.35\textwidth}
		\includegraphics[width=\textwidth]{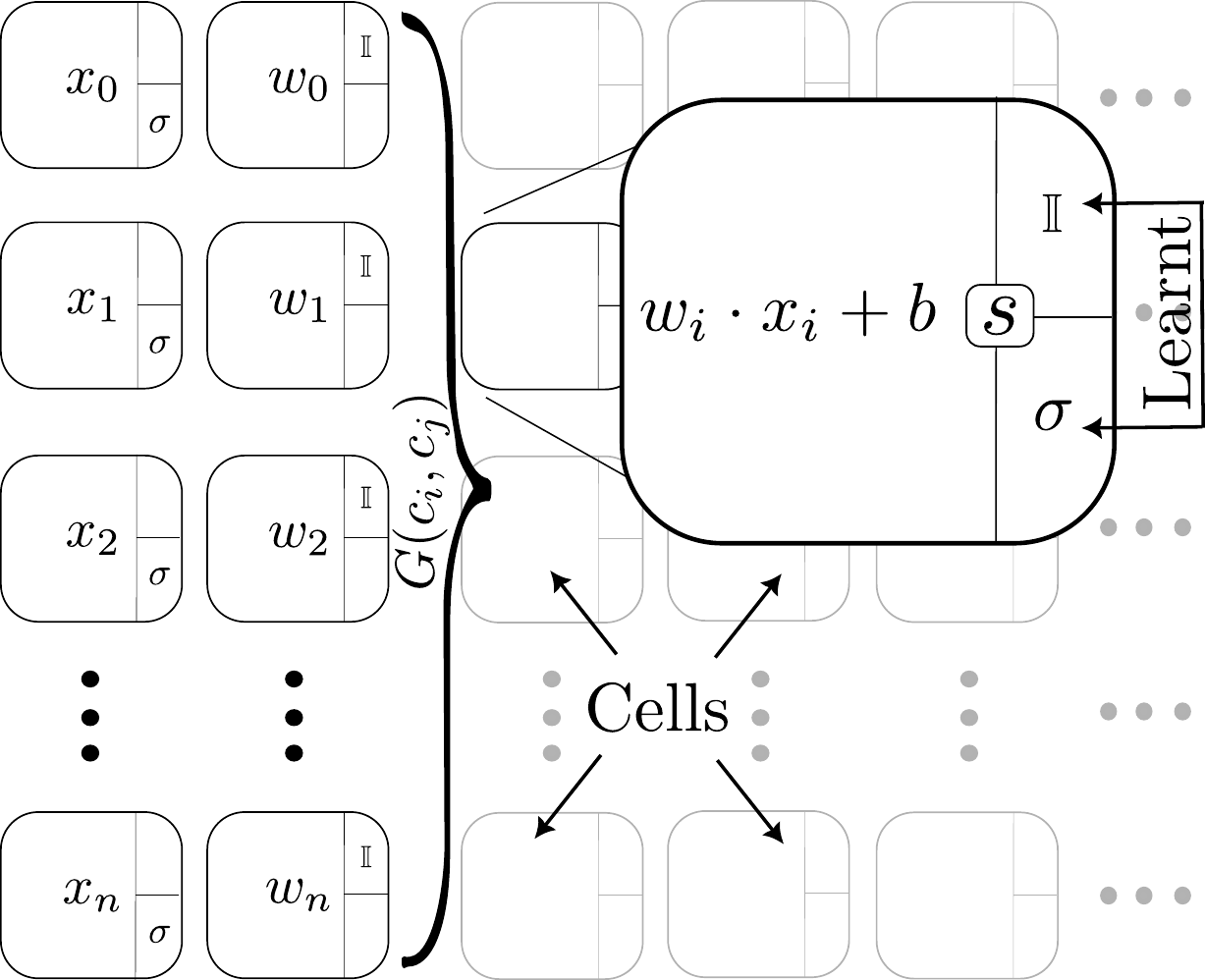}
		\caption{Proposed network on cells.}
		\label{fig::vnn_cells}
	\end{subfigure}
	\caption{The artificial neuron (a) represents the typical deep learning definition, where inputs $x_i$ are weighted by $w_i$ and biased with $b$ via (fixed) connections to this neuron then activated through the non-linear function $\sigma$ when firing, noting our use of the Einstein summation convention. (b) shows the proposed state-based Von Neumann neuron, where $s$ is the Codd state of the neuron and $s^\prime$ is the complement (usually $1-s$) enabling them to individually learn whether to passing on the signal or activate. (c) shows the proposed \ac{VNN} using these neurons embedded into a cellular topology, seamlessly allowing it to learn its own architecture suited to the objective function(s).}
	\label{fig::Neurons}
\end{figure}
In this approach, a neuron activates on a biased weighted sum of inputs to make a decision analogous to a non-linear switch. The network then involves a cascade of these non-linear decisions usually across fixed structures (such as multiple layers~\citep{amari_theory_1967}, trusses~\citep{liu_auto-deeplab_2019} or fabrics~\citep{saxena_convolutional_2016}) of inter-connected neurons that are functionally equivalent to a perceptron~\citep{rosenblatt_perceptron_1958}. The universal approximation theorem then ensures that any non-linear function can be approximated by the network to arbitrary accuracy as long as the structure has at least a single layer of infinite number of neurons~\citep{hornik_multilayer_1989, cybenko_approximation_1989} or many layers of finite number of neurons, i.e. it is deep enough\todo{Add Ref}.

This approach assumes that the neurons are correctly connected into the optimal architecture beforehand and that every neuron is required to partake decisions for each input of the network at every layer. Although some neurons can be designated to 'dropout' of the decision making in a process defined beforehand, the former does not take into account an evolving architecture based on the objective function and the latter makes sub-architectures, where neurons can specialize by choosing to pass on the decision to other neurons to make, very difficult to obtain unless they are pre-defined into the architecture. Both require a hand crafted or pre-defined fixed architectures to be in place during training. Even neural architecture search models, such as fabrics or trusses~\citep{saxena_convolutional_2016,liu_auto-deeplab_2019}, have a weighted form of fixed, pre-defined (albeit a very large) number of weighted sub-architectures.

We propose extending the usual definition of the deep learning neuron in Figure~\ref{fig::nn} to a Von Neumann neuron that also contains a state $s$ and use this state to enable neurons to toggle their role between passing on signals and traditional activation based decision making as shown in Figure~\ref{fig::vnn}. Depending on the value of this Codd state $s$, the neuron will pass on the biased weighted sum only (i.e. applying an Identity $\mathbb{I}$ operation instead of the activation $\sigma$) acting as a connection between neurons only (letting them defer decision making to other connected neurons) or make a non-linear decision with the designated activation function $\sigma$ as shown in the Figure below. We designate this Codd state $s$ as a learnable parameter and $s^\prime = 1-s$ taking inspiration from works utilizing contrastive loss\todo{Add Ref} and in the RealMLP~\citep{holzmuller_better_2024}.

The Von Neumann neuron will have two main advantages: Firstly, it enables the network weights $w_n$ to be directly embedded with the array of cells and act as connections or signal neurons etc. instead of only being capable of decision making. As we'll show in the subsequent section, we can either then hard-code layers ($s=1$) and connections ($s=0$) directly within the array of cells to mimic traditional deep learning architectures or learn the states $s$ directly through stochastic gradient descent. This will end up learning the network architecture with $s$ and $s^\prime$ rather than hard-coding them. 
The network then cascades not only non-linear decisions but linear superimposed signals across dynamic structures adjusted on the fly depending on the objective function. The weights of the network can also be observed directly distributed spatially across the array during its use. The result is the system's ability to learn its own network architecture and its many propagation functions or forms with Codd states $s$ via stochastic gradient descent.

Although our initial construction will be in view of neurons in arrays of cells in the subsequent section, this Von Neumann neuron can theoretically be used in any type of structured or unstructured (graphs etc.) topologies. In the next section, we describe how one can use this von Neumann neuron on array of cells to create Von Neumann Networks (VNNs) suited specifically for fast implementation on current array like digital hardware as \Johnie~intended with the help of current tensor based deep learning frameworks such as JAX\todo{Add Ref}.

\subsection{Von Neumann Networks}\label{sec::network}
\Johnie, after working with his close friend Stanis\l aw M. Ulam (creator of the Monte Carlo method), believed that the cellular model of a neural network based brain analogue would be ''more amendable to logical and mathematical treatment''~\citep[pg. 94]{von_neumann_theory_1966}. In this section, we propose such a construction through an extension of neural operators~\citep{kovachki_neural_2023} on a cellular topology and the recently introduced Neural Green's Functions~\citep{yoo_neural_2025} with our Von Neumann neurons of section~\ref{sec::neuron} in creating a neural network on cellular arrays we denote as \acp{VNN}.

\textbf{Approach.} With our Von Neumann neuron as a cell (see section~\ref{sec::neuron}), the cell's influence on its surroundings is initially equivalent to a unit response or Dirac $\delta$-function because it is isolated and unconnected. To build a network out of these cells, we need to expand the influence of each cell across the array keeping track of the cascading signals on a scalar field we will refer to as the Chua field $C$. This gradual smearing out of the neuron $\delta$-functions can be modelled by using the correct propagation function or~\ac{PSF} similar to Huygen's principle, where many point sources interact to create a wavefront, so that the \ac{PSF} then models the propagation of the cell signals. Since we eventually wish to utilize stochastic gradient descent to solve for the network connections via back propagation, the backward operator is a linear differential operator $\mathcal{D}$, this propagator will be the Green's function $G$ and the forward pass is the integral form of the differential operator involving this Green's function. We will show that this formulation is an extension of neural operators~\citep{kovachki_neural_2023} to a cellular topology, its implementation via convolutions~\citep{li_fourier_2020} and a cellular form of the recently introduced Neural Green's Functions~\citep{yoo_neural_2025}.

\textbf{Formulation.} Let each cell $c$ in the cellular array we call the Chua field $C$ be defined by the Von Neumann neuron of section~\ref{sec::neuron} and Figure~\ref{fig::vnn}. Given the universal approximation theorem, each cellular neuron $c$ contributes a small non-linear superposition into the final approximation by `divvying up' these contributions~\citep{hornik_multilayer_1989, cybenko_approximation_1989}. Following~\citep{kovachki_neural_2023}, let these contributions be represented the application of a linear differential operator $\mathcal{D}$ to our Chua field $C$ driven the by source $f(c)$ that represents the inputs as
\begin{equation}
	\mathcal{D} C(c) = f(c).\label{eqn::backward}
\end{equation}
Then the contribution or response of each cell with input cell $c_i$ at some other cell $c$ is given by $\delta(c-c_i)$ and depends on the response of the system called its Green's function $G$~\citep{arfken_mathematical_2001} as 
\begin{equation}
\mathcal{D} G(c,c_i) = \delta(c-c_i). 
\end{equation}
The linearity of operator $\mathcal{D}$ allows the solution for any source $f(c)$ to be constructed via the total superposition of these contributions via the integral 
\begin{equation}
	C(c) = \int_{\Omega} G(c, c_i) f(c_i) dc,\label{eqn::forward}
\end{equation}
over the sub-field with boundary $\Omega$, thereby constructing the inverse operator $\mathcal{D}^{-1}$. Numerically, equation~\eqref{eqn::forward} is essentially the forward pass of the network given a suitable function $G$ and equation~\eqref{eqn::backward} is applied during back-propagation via automatic differentiation on the computational graph. It is important to note that in general the Green's function $G$ is independent of the sub-field with boundary $\Omega$ and the source term (and thus inputs) $f(c)$~\citep{yoo_neural_2025}. Thus, the final form of the field $C$ will contain the various input and output sources at the desired locations when optimizing the objective function and the regions in-between will form the ``layers'' of the \ac{VNN} according to the final Codd states $s$.

\textbf{Interpretation.} The Green's function the \ac{VNN} typically acts as a propagator or \ac{PSF} of the system that enhances or reduces the effects of the source $f(c)$ with respect to the different points in the field~\citep[pg. 550]{arfken_mathematical_2001}, and in our case it propagates the signals across the field $C$ to form connections and/or activations of the Von Neumann neurons comprising the cells $c$ according to the objective function and learnt Codd states $s$ (see Figure~\ref{fig::vnn_cells}).
\todo{quantum interp here}

\textbf{Implementation.} To implement the \ac{VNN}, we utilize a tensor form of equations~\eqref{eqn::backward}and~\eqref{eqn::forward}, the Chua field as a tensor $\bm{C}$ and the discrete forms of the operators $\bm{\mathcal{D}}$ \& $\bm{G}$. Since the \ac{VNN} is embedded within the cells, the Chua field $\bm{C}$ contains the input and outputs and are not restricted to any particular shape or number. As neural operators can be represented as convolution kernels~\citep{li_fourier_2020}, we are able to use convolutions in representing the Green's functions $G$ as large kernels and also learn multiple discrete Green's functions $\bm{G}_j$ for the system by adapting well known general convolution operators in various deep learning frameworks such as JAX~\citep{jax2018github}. The weights $w$, biases $b$ and Codd states $s$ are present in each of the cells, so that we may use the discrete fields for weights $\bm{W}$, biases $\bm{B}$ and states $\bm{S}$ respectively to represent the main constituents of the \ac{VNN}. Finally, the integration of the entire sub-field with boundary $\Omega$ will need to be broken down into multiple discrete steps that depend on the distance between the various inputs \& outputs, as well as the size of the kernel representing the Green's functions $\bm{G}_j$.

To compute a step of the forward pass of the randomly initialized \ac{VNN} with inputs represented as the discrete spatial distribution $f[\bm{c}]$ within $\bm{C}$, we convolve the field $\bm{C}^{(n)}$ at timestep $t_n$ with $\bm{G}_j$ as $\bm{C}^{(n)}_j=\bm{G}_j\ast \bm{C}^{(n)}$. After which there are two possibilities depending on the Codd state field $\bm{S}$ assuming it is gated to produce $s \in [0, 1]$ per cell. Firstly, there are cells which choose to `pass-through' the signal having been weighted by $\bm{W}$ and will not activate. Secondly, there are cells that decide to activate by $\sigma$ given its weighted neighbours. In the latter, the signals passed through by neighbouring cells already have the weights applied, so that the next timestep $t^{(n+1)}$ can be given by
\begin{equation}\label{eqn::step}
	\bm{C}^{(n+1)} = \frac{1}{M}\sum_{j}^{M-1} \left[ \left(\bm{I}-\bm{S}\right)\cdot\left(\bm{C}^{(n)}_j\cdot \bm{W}\right) + \bm{S}\cdot\sigma\left(\bm{C}^{(n)}_j + \bm{B} \right) \right],
\end{equation}
where $M$ is the total number of Green's functions with $0 \leqslant j < M$, $\cdot$ and $\ast$ represent the element-wise (Hadamard) product and convolution operations respectively, and $\bm{I}$ is the identity field (i.e. all ones). The total number of steps depends on the desired output locations and size of the Green's kernel with the maximal distance that such a kernel can cover is $\nicefrac{(k-1)}{2}$, where $k$ is the kernel size (assuming $k$ is odd and $\bm{C}$ size to be even) in the direction of propagation to ensure information is interconnected between each step. Note that the averaging across $j$ does not need to be done per step and can be done at the final step for maximal feature representation at the cost of computational complexity and memory. At first it may seem that need for these steps seem unusual, but they effectively represent the computation of each ``layer'' of the network analogous to the matrix-vector multiplication in a deep learning \acp{MLP} (see Appendix~\ref{sec::correspondence}), the structure of which can be controlled depending on how the Green's functions are applied to form different \ac{VNN} architectures.

\textbf{Architectural Forms.} One of the chief advantages of \acp{VNN} is the simplicity of representing network architectures, especially for \acp{MLP} in arbitrary dimensions as their connectivity in this space have the same forward (and thus backward) operators regardless of its dimension. In this work, we present two forms of \acp{VNN} based on either assuming that the inputs and outputs have an \ac{MLP}-like structure as hyperplanes (i.e. for a 2D field $C$, these are row or column vectors) or not making any assumptions about these structures at all. In each case, to form the actual network architectures themselves, we need to either hand-craft the Codd states $s$ into the desired form or learn the architecture itself given a particular kernel shape and application of the Green's functions that dictate the propagation of the neural signals and therefore define the local connectivity of the Von Neumann neurons. A discussion of the former is made in Appendix~\ref{sec::correspondence} to show the correspondence of the proposed theory to the well established deep learning theory. The latter is discussed in the remainder of this section.

Let the $h^{(n)}(c)$ be a function of cells $c$ at layer $n$ in a hyperplane structure of dimension $d-1$ for a Chua field $\bm{C}$ of dimension $d$ or simply $h^{(n)}$ for brevity. We assume that the forward pass is towards the ascending indices in the channel dimension and adopt the channel last convention. We also limit the Green's function $\bm{G}$ to this structure, but with a width $\omega$ in the channel dimension to obtain a hyperplane layer specific Green's functions $\widebar{\bm{G}}_j$ with a step distance of $\omega$ for each layer. The same structure can be applied to the weights $\bm{W}$, biases $\bm{B}$ and states $\bm{S}$ to give $\widebar{\bm{W}}$, $\widebar{\bm{B}}$ and $\widebar{\bm{S}}$. Then the forward pass step is also limited to $\omega$ convolutions of hyperplanes of dimension $d-1$, so that the next layer $h^{(n+1)}$ is given by
\begin{eqnarray}\label{eqn::step_hplane}
	h^{(n+1)} = \frac{1}{M}\sum_{j}^{M-1} \left[ \left(\widebar{\bm{I}}-\widebar{\bm{S}}\right)\cdot\left(\widebar{\bm{G}}_j\ast h^{(n)}\cdot \widebar{\bm{W}}\right) + \widebar{\bm{S}}\cdot\sigma\left(\widebar{\bm{G}}_j\ast h^{(n)} + \widebar{\bm{B}} \right) \right],
\end{eqnarray}
with convolution having the highest operator precedence. For the case that no structure can be or is assumed for the network and its convolution, the full Green's function is learnt and the convolution is always full $d$ dimensional, therefore the layers of the network are also $d$ dimensional in shape and the forward step is given by~\eqref{eqn::step}. We may also incorporate information bottlenecks and other constraints on cells and the architecture through setting those states to $-\infty$ similar to its use in triangular matrices for computing attention in transformers. In the next section, we present the experiments conducted and theorem proved in this work to demonstrate the \ac{VNN} in its various forms.

\textbf{Experiments.} To show that \acp{VNN} can support not only regular \ac{MLP} vector like data, we construct \acp{CLP} and tape-like \acp{MLP} models show the various different configurations of the inputs and outputs, and thus the networks, are possible in solving binary arithmetic including exclusive OR (XOR) operations. Models were trained to do 8-bit arithmetic between two inputs that result in an output of the same bit depth. To show that multi-dimensional \ac{VNN} \acp{MLP} are simple and scalable, we present image classification results for the simple MNIST and CIFAR10 image datasets compared to regular deep learning \acp{MLP}. A simple three layer (256, 128, 32) and five layer (512, 256, 128, 32, 16) traditional deep learning \acp{MLP} models were compared with Hyperplane ($d-1$ dimensional) and full ($d$ dimensional) \ac{VNN} models for MNIST and CIFAR10 respectively. 

To show that the \ac{VNN} is capable of learning a unified model for all arithmetic and logic, we also create an \acf{ALU} model to demonstrate its ability to use the full $d$ dimensional space of the Chua field $C$ at once rather than the hyperplane structure. We compare a traditional \ac{MLP} with both hyperplane and full \acp{VNN} for 8-bit arithmetic and logic given by a 4-bit control code to produce a single 8-bit output with carry bits ignored in this study. For the full model, we compare two configurations: one with a box like shape with the control code orthogonal to the inputs/outputs, so that most of the model is contained compactly within the space between them; and a tape-like model where all controls, inputs and outputs are configured in a line together. In the last experiment, we push the advantage \acp{VNN} of learning architectures into a network that has not discernible layers because the entire space is a 1D model on a 1D Chua field $C$ with 1D Green's functions to emulate a \ac{TM} like architecture with a tape like model, but with the Green's function acting like a learnable transition table or program. We use this model as the building block for the theorems of computational universality of \acp{VNN}.

\textbf{Setup.} Implementation of the \Neurocell~framework and \acp{VNN} were performed on the JAX numerical and auto-differentiation framework~\citep{jax2018github} (the code is provided as supplementary material). The experiments were carried out on NVIDIA RTX 3090 \& 2080 Ti GeForce graphics cards (CUDA 12.8) and an AMD Radeon 7900 XTX (ROCm 6.1) graphics card. All results were verified across these cards, and CIFAR experiments were also run on an NVIDIA A100. The JAX \textls{grad} transform was used to perform the backpropagation and the AdamW optimizer~\citep{loshchilov_decoupled_2018} with a cosine decay learning rate scheduler starting from $1e^{-3}$ wherever possible.

\section{Results} 
\begin{figure}[ht!]
	\centering
	\includegraphics[width=\textwidth]{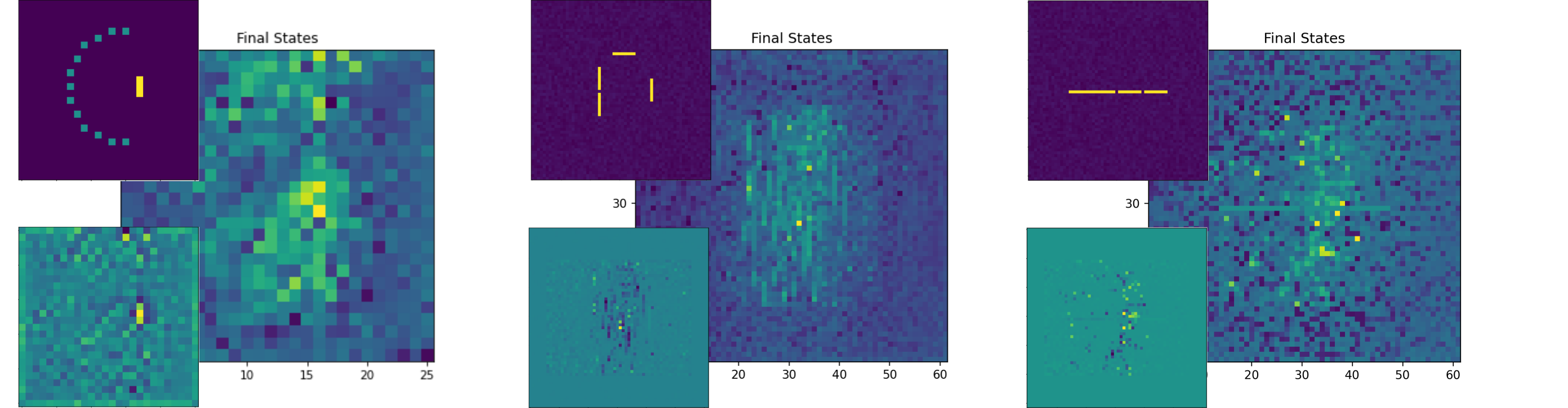}
	\caption{The resulting \acp{VNN} final states (thus the architecture) for \ac{CLP} (left) and two different \ac{ALU} models. The inputs/outputs and controls in the case of the \ac{ALU} models are shown in the top left insets with the bottom right showing the final weights of the models.}
	\label{fig::Results_ALU}
\end{figure}
\textbf{Experiments.} The various types of initial network shapes explored in this work and the resulting \ac{VNN} final model states for the \ac{CLP} and \ac{ALU} models using the full Green's functions approach are shown in Figure~\ref{fig::Results_ALU} (see Appendix~\ref{sec::Greens} for the resulting Green's functions). The \ac{CLP} model shows \acp{VNN} using a concave shape for the input in a simple classification task~\citep{aeberhard_comparative_1994}. Figure~\ref{fig::Overview} (right) shows the resulting MNIST model and its data-driven encoder-like structure determined automatically during training. Indeed, the \ac{VNN} supports arbitrary shapes for its inputs/outputs as long as there are sufficient steps taken between them to ensure that signals are fully connected. Table~\ref{tab::Results_Table} shows how the \ac{VNN} \ac{MLP} compares against the deep learning \ac{MLP} on simple datasets and tasks. 
\begin{table}[ht!]
	\small
	\centering
	\caption{Comparison of traditional MLPs and VNNs performance across various datasets and tasks}
	\label{tab::Results_Table}
	\renewcommand{\arraystretch}{1.2} 
	\begin{tabular}{@{} l l l c r @{}}
		\toprule
		\textbf{Dataset} & \textbf{Task} & \textbf{Model} & \textbf{Accuracy} & \textbf{\# Parameters} \\
		\midrule
		
		\multirow{2}{*}{MNIST} & \multirow{2}{*}{Classify} & MLP & 96 & 238.3K \\
		& & VNNs Hyperplanes & 96.4 & 52.4K \\
		\midrule
		
		\multirow{3}{*}{CIFAR10} & \multirow{3}{*}{Classify} & MLP & 60.2 & 1744.3K \\
		& & VNN Hyperplanes & 71.7 & 841.7K \\
		& & VNN Full & 72.2 & 365.5K \\
		\midrule
		
		\multirow{3}{*}{ALU} & \multirow{3}{*}{Arithmetic} & MLP & 67.6 & 2K \\
		& & VNN 1D Hyperplanes & 67.3 & 7.9K \\
		& & VNN Full & 99.9 & 16.3K \\
		
		\bottomrule
	\end{tabular}
\end{table}

The \ac{ALU} model shown in Figure~\ref{fig::Results_ALU} (middle) is shown in Table~\ref{tab::Results_Table} as \ac{VNN} Full and is able to learn the task that neither the deep learning \ac{MLP} nor the \ac{VNN} \ac{MLP} could. This is likely due to the extra 2D connectivity that the Full \ac{VNN} supports for a 2D Green's function rather than a hyperplane on the Chua field $C$, which was actually predicted by \Johnie~\citep{neumann_computer_1958}. The \ac{ALU} model in Figure~\ref{fig::Results_ALU} (right) shows that this cannot be caused by the 2D layout of the input/outputs and control alone, since all these elements are entirely in 1D in this model, but rather the 2D instructions that appear to be necessary. Finally, the vertical and horizontal structures imprinted on the final states of both \ac{ALU} models are purely data-driven and requires further exploration.
\begin{figure}[ht!]
	\centering
	\includegraphics[width=\textwidth]{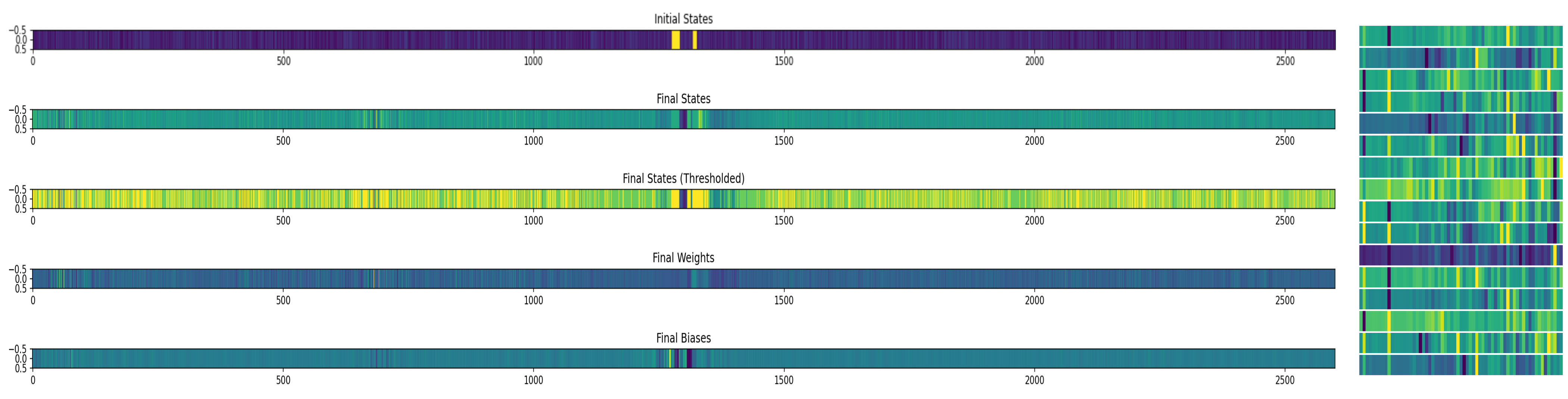}
	\caption{The resulting \ac{VNN} tape based model (purely 1D Chua field $C$ with 1D Green's functions and 1D convolutions) for XOR arithmetic of two 8-bit numbers. The 16 ``instructions'' learnt as 1D Green's functions for the model are shown on the right.}
	\label{fig::Results_Tape}
\end{figure}
Figure~\ref{fig::Results_Tape} shows that fact all one needs is a tape (1D) structure, where the resulting tape based model is from only 1D operations on a 1D field $C$ for 8-bit XOR arithmetic. The resulting model and setup is very reminiscent of a \acf{TM} and the Green's function represent the instructions learnt for the task (see Figure~\ref{fig::Results_Tape} (right)). Additional work is required in studying these instructions and applying them to other unsolved problems in \acp{TM} such as the Busy Beaver~\citep{rado_non-computable_1962}. Lastly, although the CIFAR10 \ac{VNN} results outperform the traditional \ac{MLP}, they are still far from state-of-the-art to \acp{CNN} such as the ResNet that can achieve 94\% accuracy~\citep[see DAWNBench Challenge]{he_deep_2016}. Further work is required in constructing \ac{CNN}-like \acp{VNN} that are translational/rotational equivariant to reach that level of performance.

\textbf{Theorems.} We construct three proofs (one of which can be found in Appendix~\ref{sec::correspondence}) that show \acp{VNN} are Turing complete. There are three ways one can show completeness: construct a universal program/machine that can simulate any program; show that it can perform all basic operations that can lead to such a program; construct an actual Turing machine (or an equivalent Turing complete system) within the system itself. Alan Turing constructed the original \ac{UTM} as the first type~\citep{turing_computable_1937} (see Appendix~\ref{sec::TM} for a basic introduction), \Johnie~introduced the Von Neumann (hardware) architecture as second type~\citep{von_neumann_first_1993} and the \ac{GoL} proof~\citep{rendell_fully_2013} is example of the third type. The concepts utilized in this section rely on the theory of computation and a brief basic introduction to \acp{TM} is provided in Appendix~\ref{sec::TM}. In our \ac{VNN}, the \ac{CM} features stochastic gradient descent or manually (see Appendix~\ref{sec::correspondence}), so that it is capable of learning its own architecture.
\begin{definition}[Cellular Machine]\label{def::DFT}
	A \ac{CM} is a system on an infinite array of cells in which each cell maintains its own state and that their connectivity is defined by a neighborhood rules. This function is in general a set of Green's functions that characterizes the machine, and in conjunction with the individual cell states, create the program/transition rules. The inverse of the Green's functions can be used to backpropagate signals and learn its own rules and therefore self-engineer its program.
\end{definition}
\begin{theorem}[Machine-based Universality] \label{thm::Machine}
	The \acf{CM} is equivalent to a \ac{TM} that is Turing complete by being able to simulate the Von Neumann (hardware) architecture.
\end{theorem}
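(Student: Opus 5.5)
The plan is to follow the second route to completeness listed above. I will show that a \ac{CM} can realise every primitive of the Von Neumann (hardware) architecture: logic gates, wires and fan-out, memory and its addressing, and a clocked control loop. Since a stored-program machine with unbounded memory is Turing complete, the \ac{CM} will then be too. The converse direction, \ac{CM} $\Rightarrow$ \ac{TM}, I expect to be routine. Each update~\eqref{eqn::step} is a finite, local, computable map of the cell states and a finite kernel $\bm{G}_j$, so on any finite active region a \ac{TM} can simulate one timestep in finitely many steps. Simulating step by step then gives equivalence in computational power, with the usual caveat that the real-valued weights are taken rational or fixed-precision.

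\textbf{Step 1 (wires and gates).} I fix Codd states $s\in\{0,1\}$ by hand, as in Appendix~\ref{sec::correspondence}.
\begin{itemize}
\item A cell with $s=0$ and a shift kernel $\bm{G}$ (a Kronecker delta displaced by one cell) is a wire: it transports a signal unchanged, one cell per timestep.
\item Fan-out comes from a kernel with two displaced deltas.
\item A cell with $s=1$ and $\sigma$ taken as the Heaviside step, or a steep ReLU combination clipped to $\{0,1\}$, gives a threshold neuron $\sigma(w_1x_1+w_2x_2+b)$. With $w_1=w_2=-1$ and $b=1.5$ this is NAND. NAND together with wires and fan-out is functionally complete, in the McCulloch--Pitts sense \citep{mcculloch_logical_1943}.
\item Delay lines of $s=0$ cells synchronise signal arrival, so every gate sees its inputs on the same timestep.
\end{itemize}

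\textbf{Step 2 (memory and control).} From NAND gates I build an SR latch. Its feedback loop is a closed wire ring, which is possible because the kernel may propagate in any direction on the array. The latch is the one-bit register. Registers are tiled into a RAM with a decoder, and I add an ALU, whose existence the \ac{ALU} experiment of Figure~\ref{fig::Results_ALU} also suggests, and a program counter with a fetch--decode--execute circuit driven by a clock. The clock is a ring oscillator, namely a loop containing an odd number of inverters. This circuit is the Von Neumann architecture \citep{von_neumann_first_1993}.

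\textbf{Step 3 (unbounded memory).} The array in Definition~\ref{def::DFT} is infinite, so the memory can be a semi-infinite tape of latch cells along one axis, with a read/write head that moves along it. This matches the 1D tape model of Figure~\ref{fig::Results_Tape}, and it is what lifts the finite hardware to full Turing power.

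\textbf{Main obstacle.} I expect the hardest step to be making the gates exactly discrete and stable. There are two problems.
\begin{itemize}
\item The update in~\eqref{eqn::step} averages over $M$ Green's functions and sums continuously. Signals could therefore drift or leak into neighbouring cells, and errors would compound over unboundedly many timesteps.
\item A single global kernel must serve both wires and gates.
\end{itemize}
My first approach takes $M=1$ and a single kernel that is a one-sided shift, so there is no leakage, and treats the $s$-field as the only spatially varying program. I then check that a saturating $\sigma$ re-binarises the signal at every gate. If that fails, I would allow the finitely many kernels $\bm{G}_j$ to be gated per cell through the states, which the definition of a \ac{CM} permits because "the Green's functions in conjunction with the cell states" form the program.
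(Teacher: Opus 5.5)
Your route differs from the paper's. The paper's proof only sets up a dictionary (cells as tape, Codd states as machine states, Green's functions as program, convolution as head) and then points to the learnt 8-bit ALU of Figure~\ref{fig::Results_ALU} as a Von Neumann architecture. Building the hardware by hand is the more substantive plan, but its core mechanism does not fit the model. In \eqref{eqn::step} each $\bm{G}_j$ is one translation-invariant kernel used at every cell, the $M$ kernels are averaged uniformly everywhere, and only $\bm{S}$, $\bm{W}$, $\bm{B}$ vary in space.

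With $M=1$ and $\bm{G}$ a single displaced delta $\delta_e$, the new value at $c$ depends only on the old value at $c-e$. The array therefore splits into independent lines along $e$ carrying unary maps. No cell ever combines two signals, so there is no NAND and no fan-out, and no closed ring exists, so there is no latch and no ring oscillator. Your two-delta fan-out kernel and the ``any direction'' kernel of Step 2 are global changes. The latter makes every wire bidirectional, which is exactly the leakage you wanted to exclude. Per-cell gating of the $\bm{G}_j$ is a different machine from \eqref{eqn::step}, and it is unnecessary. Note also that a decision cell computes $\sigma(\bm{G}\ast\bm{C}+\bm{B})$ with no weight, so the $-1$ weights of your NAND must sit on the $s=0$ cells feeding it.

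The missing idea is to do all routing with the $\bm{W}$ field and fix one kernel whose support is directed but positively spanning. For example, take unit entries at $d_1=(1,0)$, $d_2=(0,1)$, $d_3=(-1,-1)$, so cell $c$ reads only $c-d_k$. No $-d_k$ lies in the support, so no two cells read each other. Yet $d_1+d_2+d_3=0$ permits closed loops, of length a multiple of $3$. The pieces are then:
\begin{itemize}
\item cells with $s=0$, $W=0$ are insulators that always output $0$;
\item cells with $s=0$, $W=\pm1$ are directed wires, whose sign supplies the negative weights;
\item merging and fan-out occur wherever a cell has two non-insulating predecessors or successors;
\item $s=1$ cells with Heaviside $\sigma$ are exact threshold gates, e.g.\ NAND $=\sigma(-x_1-x_2+\tfrac32)$. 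ReLU also works on these integer signals, using a constant-one cell with $s=1$, $B=1$.
\end{itemize}
Every value is then an integer and every gate output is exactly $0$ or $1$, so your drift obstacle disappears without any stability argument.

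Step 3 is the second weak point. A finite CPU with a finite-width address decoder cannot reach an unbounded tape, and a fixed circuit has no head that ``moves''. The standard repair is to make the head a marker. Tile one axis periodically with a finite synchronous circuit per tape square, holding that square's symbol latch, a head/control-state register and the transition logic, and passing the control state to its two neighbours. This is a 1D cellular automaton simulating the Turing machine. Each tile runs its own identical, hence synchronised, oscillator. The infinite array of Definition~\ref{def::DFT} admits this periodic configuration, with the finite input written into finitely many latches.

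With these two repairs your construction proves the universality half. The appeal to a trained, finite ALU with no control unit or unbounded memory does not establish that on its own. Your converse, a stepwise simulation with rational or fixed-precision parameters, is the right argument, and its caveat is needed. Analog recurrent networks with arbitrary real weights are known to exceed Turing power.
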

\begin{proof}
	We can observe that a \ac{CM} is equivalent to a \ac{TM} in the following way. The cells themselves are the (multi-dimensional) tape of the \ac{TM}, the learnt states are the states of the \ac{TM}, the Green's functions are the program and the convolutions are analogous to operations of the read/write head moving across the tape. To show that a \ac{CM} is universal and capable of being a \ac{UTM}, consider the \ac{ALU} model presented in Figure~\ref{fig::Results_ALU} (middle) that shows a \ac{CM} can possess input/outputs, the \ac{ALU} and its cells, making it equivalent to a Von Neumann (hardware) architecture, which is Turing complete.
\end{proof}

\begin{theorem}[Rules-based Universality] \label{thm::Rules}
	The \acf{CM} is equivalent to a \ac{GoL} system, able to construct a \ac{TM} and is therefore capable of universal computation.
\end{theorem}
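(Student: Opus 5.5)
The plan is to show that a \ac{CM} can exactly reproduce the \ac{GoL} update rule, and then to invoke the known Turing completeness of \ac{GoL}~\citep{rendell_fully_2013}. That known result gives a \ac{TM} (indeed a universal one) constructed inside \ac{GoL}. By Definition~\ref{def::DFT}, a \ac{CM} is an infinite array of cells whose connectivity comes from Green's functions and whose update combines these with per-cell states. It is therefore enough to exhibit one choice of Green's functions $\bm{G}_j$, weights $\bm{W}$, biases $\bm{B}$, Codd states $\bm{S}$ and activation $\sigma$ for which the step~\eqref{eqn::step}, possibly repeated a fixed number of times per \ac{GoL} generation, maps every binary configuration $\bm{C}^{(n)}\in\{0,1\}^{\mathbb{Z}^2}$ to its \ac{GoL} successor.

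\textbf{Step 1 (neighbourhood as a Green's function).} Take $k=3$. Let one Green's function be the Moore kernel $\bm{G}_1$, with all ones except a centre value of $\tfrac12$. Convolving with it gives $m = N + \tfrac12 x$, where $N$ is the live-neighbour count and $x$ is the current cell value. The \ac{GoL} rule says the next state is $1$ exactly when $m\in\{2.5,3,3.5\}$, i.e.\ when $m\in[2.5,3.5]$. Since $m$ only takes half-integer values, this is the same as requiring $m\in(2.25,3.75)$.

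\textbf{Step 2 (rule as activations).} An indicator of an interval is not a single monotone activation, but it can be written as a difference of two thresholds: $\mathbf{1}[m>2.25]-\mathbf{1}[m>3.75]$. With $\sigma$ a step function, or a steep enough sigmoid or ReLU combination that is exact on the discrete lattice of values of $m$, I would set $\bm{S}=\bm{I}$ in all cells so that every cell activates, and use two sub-steps. The first sub-step uses $M=2$ channels whose biases produce the two thresholded values. Because~\eqref{eqn::step} averages over $j$ with a common $\bm{B}$, I would instead place the two thresholds in adjacent cells of a refined lattice, or use a second Green's function that shifts the bias through an offset cell. The second sub-step uses the pass-through branch with $\bm{S}=0$ and a kernel of the form $(1,-1)$ to take the difference, and $\bm{W}$ rescales by $M$ to undo the averaging.

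\textbf{Step 3 (conclude).} Once one \ac{GoL} generation equals a fixed number of \ac{CM} steps, the \ac{CM} simulates \ac{GoL} step for step. Rendell's construction then transfers to give a \ac{TM}, and hence universal computation.

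The main obstacle is Step 2. The update~\eqref{eqn::step} averages over $j$ and uses one bias field for all channels, so the non-monotone birth/survival window cannot be produced in a single step. My first attempt is the spatial-encoding trick: each \ac{GoL} cell is represented by a $2\times2$ block of \ac{CM} cells, with $\bm{B}$ and $\bm{S}$ varying across the block so that the two thresholds are computed in different sub-cells and then recombined linearly on the next step. Because these fields are periodic over blocks, one \ac{GoL} generation corresponds to a constant number of \ac{CM} steps. It remains to check that the averaging factor $1/M$ and the $\tfrac12$ centre weight keep all intermediate values exactly binary.
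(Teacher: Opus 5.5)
Your route is the paper's: realise the \ac{GoL} rule as a Green's-function neighbourhood update, then import Rendell's universal machine. The paper only asserts its step (2), that the 8-neighbour rules ``can be learnt''. Your Step 1 makes this precise and is correct: with centre weight $\tfrac12$, the next state is $1$ iff $m\in(2.25,3.75)$. Carrying the \ac{GoL} bit in the Chua field is also the right choice, rather than in the two-valued Codd state as the paper's point (1) suggests, because $\bm{S}$ does not evolve under \eqref{eqn::step}.

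The gap is in Step 2, and it is not the $1/M$ factor. In \eqref{eqn::step} the fields $\bm{G}_j,\bm{W},\bm{B},\bm{S}$ carry no step index. So you cannot use $\bm{S}=\bm{I}$ with one kernel in a first sub-step and $\bm{S}=0$ with a $(1,-1)$ kernel in a second. Your block encoding is the right repair, but it has to face the real difficulty. One translation-invariant kernel acts at every sub-cell, so offsets meant for the threshold cells also act at the combining cell, and vice versa. Off-phase cells also hold stale values. A $2\times2$ block is too tight for a single kernel: if the state cell $A$ reads $-Q$ at offset $e\in\{0,1\}^2$, then $Q$'s own neighbourhood sum needs that same offset $e$ with weight $+1$ (it points to the $A$ of the block shifted by $e$).

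A layout that works uses $M=1$ and $\sigma=H$, the Heaviside step.
Place \ac{GoL} cell $(i,k)$ at $A=(i,6k)$, with helpers $P=(i,6k+1)$ and $Q=(i,6k+4)$. Freeze every other cell at $0$ via $S=0$, $W=0$, and start from zeros off the $A$ cells.
Use one kernel that reads $+1$ at offset $(0,1)$ and $-1$ at $(0,4)$. It also reads weight $1$ at the offsets $(r,6u-1)$ and $(r,6u-4)$ for $r,u\in\{-1,0,1\}$, with weight $\tfrac12$ when $r=u=0$.
Set $S=0$, $W=1$ at $A$; set $S=1$ with $B=-2.25$ at $P$ and $B=-3.75$ at $Q$.

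Every unintended offset then lands in columns $2,3,5 \pmod 6$, i.e.\ on frozen zeros. Hence $A$ computes exactly $P-Q$, and $P,Q$ compute exactly $H(m-2.25)$ and $H(m-3.75)$. Since $A$ reads only $P,Q$ and $P,Q$ read only $A$, stale values never propagate. $A$ holds the $n$-th \ac{GoL} generation at step $2n$, with all values in $\{0,1\}$, and your Step 3 goes through. Like the paper's argument, this shows only that a \ac{CM} simulates \ac{GoL}, not the converse ``equivalence'' in the statement, but that direction is all universality needs.
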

\begin{proof}
	We begin by noting that \Johnie's original formulation of a cellular system was for self-replication that was implemented as a \ac{TM} with a program for replication and is therefore Turing complete~\citep{von_neumann_theory_1966}. We showed that (1) a \ac{VNN} is possible that learns a (continuous) two-state system as a contrastive learning process between the state 0 and 1 (see Figure~\ref{fig::vnn}) similar to the \ac{GoL}, (2) it is possible to learn all of the rules of \ac{GoL} as a neighborhood function, since it only involves the 8-connected neighborhood and (3) it is then possible to build the \ac{UTM} directly on the cells using these two-state rules as shown similarly for the \ac{GoL}~\citep{rendell_fully_2013} (see Appendix~\ref{sec::GoL}).
\end{proof}

\textbf{Limitations and Future Work.}
The current \Neurocell~framework relies on learning Green's functions using convolutions, but the convolution kernel sizes tend to be large (> 13). Better support for large kernel sizes need to be explored, such as scaling methods~\citep{ding_scaling_2022} or dilated approximations~\citep{guo_visual_2023}. The states being optimized are not strictly discrete like \acp{CA}, perhaps quantized states could improve efficiency~\citep{hwang_fixed-point_2014} and a study similar to~\citet{li_training_2017} would help. Further suggested studies include exploring other deep learning models using correspondence, adaptation of the Von Neumann neuron to other topologies, where convolution operators exist such as graphs~\citep{kipf_semi-supervised_2017}, as well as learning to automatically construct self-replicating systems and artificial life.


\section*{Conclusion}
This work presented the \ac{VNN} that uses neurons as cells on an infinite array of cells in any dimension that learn their role in the network allowing for them to self-engineer their architecture. The result is a computationally universal system capable of networks with no discernible layers that can embed a \ac{TM} with a tape structure and a `healable' Von Neumann computing architecture purely through model training. Future work will involve creating more intricate multi-agent models, learning and automating \acp{CA} and eventually creating models that can create self-replicating machines on demand.

\small
\bibliographystyle{elsarticle-harv} 
\bibliography{references}


\appendix

\section{Brief Review of Self-replication}\label{sec::review-replication}
John von Neumann used 29 different states in his original formulation to imbue his cellular neurons with special functions during the self-replication process of his machine, where the machine encodes instructions of a universal Turing machine that eventually computes the replication~\citep{von_neumann_theory_1966}. Significant progress has been made in implementing~\citep{mange_macroscopic_2004,buckley_signal_2008} and extending self-replicating machines of von Neumann in the decades following his work. \citet{codd_cellular_1968}, with subsequent recent amendments from~\citet{hutton_codds_2010}, reduced the requirement of self-replication requiring 29 states to a total of only 8 states. Others have further simplified self-replication to fewer states, reduced the dimensionality, and removed universal requirements~\citep{banks_information_1971,wolfram_statistical_1983,langton_self-reproduction_1984}. A more detailed review of this area can be found in~\citet{sipper_fifty_1998}.

More recent work has focused on simpler \ac{CA} with as few rules as possible, while maintaining complex behaviour. The most famous of these is John Conway's Game of Life~\citep{gardner_mathematical_1970}, where the cells have only 2 states and rules that change the state of cells based on its neighbours. Surprisingly, even this simple 2-state cellular system is Turing complete~\citep{rendell_fully_2013}. More complex rules have been developed to simulate artificial life~\citep{hong_kong_lenia_2019} and are the subject of ongoing research~\citep{chan_lenia_2020}.

\section{Theory Correspondence}\label{sec::correspondence}
The correspondence principle in physics refers to when a new theory approaches the established theory at an expected limit. In this section, we discuss the correspondence of the \acp{VNN} with respect to deep learning theory. 

Let us assume that the forward operator is given by~\eqref{eqn::forward} and the back propagation is conducted according to the backward operator given by~\eqref{eqn::backward}. If we set the Green's function is to be fixed like it is for deep learning because the neurons are given as Figure~\ref{fig::nn} without Codd states $s$, then the linear differential operator $\mathcal{D}$ is also fixed and is given by the Laplacian $\nabla^2$, so that the backward operator becomes the heat equation
\begin{eqnarray}
	\nabla^2 C(c) = f(c).\label{eqn::backward_dl}
\end{eqnarray}
The Green's function to the above equation is a Gaussian function~\citep{arfken_mathematical_2001} that is commonly approximated by a smoothing kernel in practice and its propagation leads to a diffusion process, the most crude of which is the averaging filter consisting of all ones. This is what \Johnie~must have referred to when he predicted that such a system must be of diffusion type in the 1950s. Thus, the standard \ac{MLP} from deep learning literature can be implemented in \acp{VNN} as a system with Green's functions with all ones and hand-crafted Codd states $s$ (that are fixed and represent decision and signal neurons) on the Chua field $C$ as fixed parameters with the appropriate hidden layers etc. to fit the desired architectural approach (see section~\ref{sec::network}).

\begin{figure}[ht!]
	\centering
	\begin{subfigure}[b]{0.4\textwidth}
		\includegraphics[width=\textwidth]{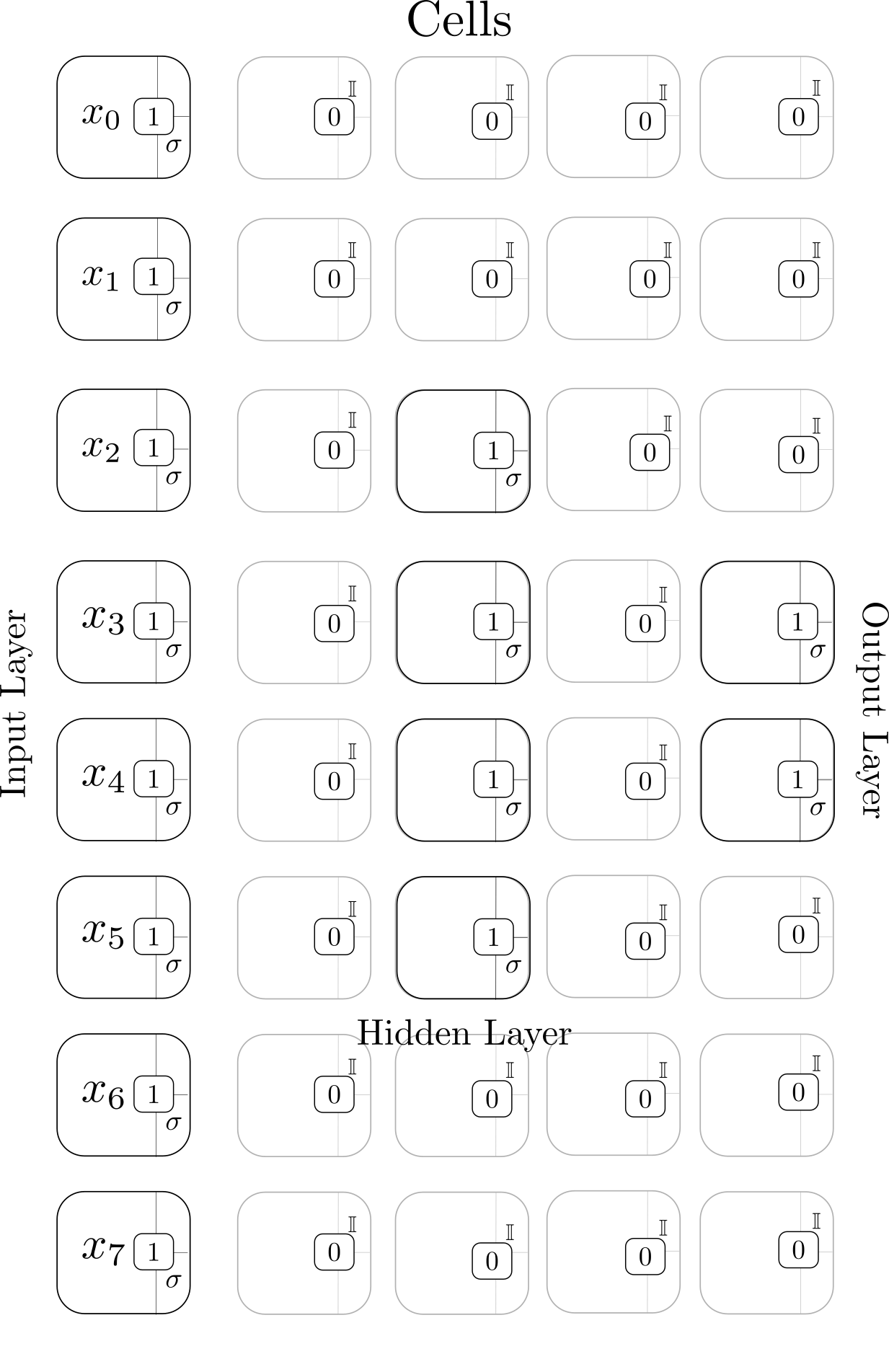}
		\caption{Deep learning like \ac{VNN} \ac{MLP}}
		\label{fig::MLP}
	\end{subfigure}
	\hfill 
	\begin{subfigure}[b]{0.28\textwidth}
		\vfill
		\centering
		\renewcommand{\arraystretch}{1.5} 
		\begin{tabular}{|c|cc|}
			\hline
			1 & \multicolumn{2}{c|}{\multirow{5}{*}{\Huge $\emptyset$}} \\
			1 & \multicolumn{2}{c|}{} \\
			1 & \multicolumn{2}{c|}{} \\
			1 & \multicolumn{2}{c|}{} \\
			1 & \multicolumn{2}{c|}{} \\
			\hline
		\end{tabular}
		\vfill
		\caption{Forward pass operator}
		\label{fig::forward}
	\end{subfigure}
	\hfill 
	\begin{subfigure}[b]{0.28\textwidth}
		\vfill
		\centering
		\renewcommand{\arraystretch}{1.5}
		\begin{tabular}{|cc|c|}
			\hline
			\multicolumn{2}{|c|}{\multirow{5}{*}{\Huge $\emptyset$}} & 1 \\
			\multicolumn{2}{|c|}{} & 1 \\
			\multicolumn{2}{|c|}{} & 1 \\
			\multicolumn{2}{|c|}{} & 1 \\
			\multicolumn{2}{|c|}{} & 1 \\
			\hline
		\end{tabular}
		\caption{Backward pass operator}
		\label{fig::backward}
	\end{subfigure}
	\vfill
	\caption{(a) represents the typical deep learning \ac{MLP} equivalent as a \ac{VNN}, where inputs are given by $x_i$, the Codd states $s$ are hard-coded to ones for hidden layers, then activated through the non-linear function $\sigma$ when firing, and zeros for signals. (b) shows the forward pass Green's function that approximates the diffusion process. (c) shows the backward pass Green's function that approximates the diffusion process. Both (a) and (b) can be used to approximate the true forward pass and backpropagation numerically with convolutions only, the latter would be weighted by the learning rate.}
	\label{fig::Correspondence}
\end{figure}
\begin{theorem}[Correspondence-based Universality] \label{thm::Correspondence}
	The \Neurocell~framework is equivalent to deep learning as a special case, where the Green's function and Codd states are fixed, and that this case is equivalently Turing complete.
\end{theorem}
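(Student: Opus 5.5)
The plan has two parts. First, show that for a suitable fixed choice of Green's functions and Codd states, the update~\eqref{eqn::step} (or its hyperplane form~\eqref{eqn::step_hplane}) reproduces exactly the layer map $h\mapsto\sigma(Wh+b)$ of a standard \ac{MLP}. Second, inherit Turing completeness from a known universality result for such networks.

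\textbf{Step 1: one layer.} I would use the hyperplane form~\eqref{eqn::step_hplane} with $M=1$, so the sum over $j$ disappears. I would fix $\widebar{\bm{G}}$ to the all-ones kernel of Figure~\ref{fig::forward}. It covers the whole previous hyperplane, so convolving with it forms the sum $\sum_i x_i$ over every cell of layer $n$. Per-edge weights $w_{ki}$ are then supplied by signal cells with $s=0$. In a width-$\omega$ slab between two decision hyperplanes, each signal cell applies its own weight from $\widebar{\bm{W}}$ via the $(\widebar{\bm{I}}-\widebar{\bm{S}})\cdot(\cdot)\cdot\widebar{\bm{W}}$ term, which acts as the identity $\mathbb{I}$ with a multiplier. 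The next hyperplane has $s=1$, so it receives $\sum_i w_{ki}x_i$ and outputs $\sigma(\sum_i w_{ki}x_i+b_k)$ through the $\widebar{\bm{S}}\cdot\sigma(\cdot+\widebar{\bm{B}})$ term.

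The key point is that the Hadamard structure never mixes the two branches once $s\in\{0,1\}$ is fixed. I would make this rigorous by choosing $\omega$ and the placement of signal cells so that each pair $(i,k)$ has its own path. This unrolls the matrix–vector product spatially, in the same way Figure~\ref{fig::MLP} shows.

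\textbf{Step 2: learning.} I would check that backpropagating through a fixed ones-kernel is convolution with the flipped kernel of Figure~\ref{fig::backward}. Gradient descent on $\bm{W},\bm{B}$ then coincides with ordinary backpropagation for the unrolled \ac{MLP}. Stacking layers by composing the slabs yields any feed-forward \ac{MLP} as a special case, and the discussion around~\eqref{eqn::backward_dl} identifies the fixed kernel with a diffusion Green's function.

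\textbf{Step 3: universality.} I would invoke the classical result of Siegelmann and Sontag: recurrent networks with rational weights and saturated-linear activation can simulate any \ac{TM}. Recurrence is available because the \ac{VNN} iterates~\eqref{eqn::step} for an unbounded number of steps on an infinite array. Feeding the output hyperplane back to the input, or equivalently translating the same fixed slab pattern periodically across the infinite Chua field, gives a recurrent \ac{MLP} with fixed Green's function and fixed Codd states. The infinite field plays the role of unbounded time.

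The main obstacle is Step 1's exactness. The averaging factor $1/M$ and the all-ones kernel could leak signal between unrelated cells, so the per-edge weight routing must be shown to be faithful. I would first try to sidestep this by choosing the kernel support to match the layer width exactly and placing zero-weight signal cells as insulators. If that fails, I would absorb the leakage into the learned weights as a linear change of variables.
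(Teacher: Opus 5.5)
\textbf{The gap is in Step~1, and everything downstream depends on it.}

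In~\eqref{eqn::step_hplane} the weight field enters only as a per-cell Hadamard factor applied \emph{after} the fixed convolution. So a signal column ($s=0$) realises $x\mapsto \mathrm{diag}(w)\,Tx$, where $T$ is the Toeplitz matrix of the fixed kernel; summing over $j$ only replaces $T$ by $\frac{1}{M}\sum_j T_j$.

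If the all-ones kernel covers the whole previous hyperplane, then $T=\mathbf{1}\mathbf{1}^\top$ and every signal cell holds $w_c\sum_i x_i$. The individual $x_i$ are merged before any weight touches them, so signal cells cannot supply per-edge weights $w_{ki}$.

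Adding signal columns to the slab does not help: $\mathrm{diag}(w_2)\mathbf{1}\mathbf{1}^\top\mathrm{diag}(w_1)\mathbf{1}\mathbf{1}^\top=(\mathbf{1}^\top w_1)\,w_2\mathbf{1}^\top$ is still rank one. With full coverage, every decision cell receives the same multiple of $\sum_i x_i$, so the hidden units differ only in their biases. The obstruction is the convolution itself, not mixing between the two Hadamard branches.

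Your instinct that each pair $(i,k)$ needs its own path is right, but it is incompatible with the kernel you fixed. Your first fallback (kernel support equal to the layer width) is exactly this rank-one case. Your second (absorbing leakage by a linear change of variables) cannot raise the rank of the realisable maps.

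\textbf{What works is the opposite choice.} Use a kernel whose support is much \emph{smaller} than the layer and which reads only the previous column (a short version of Figure~\ref{fig::forward}), with $W=0$ signal cells as insulators. Then you can build isolated wires, fan-out, scaling, and accumulation. Each product $w_{ki}x_i$ is formed at a cell whose window sees only wire~$i$. Path lengths agree automatically because each step advances exactly one column.

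In a 2D field, however, a contiguous window cannot let two wires pass each other without adding them. You therefore also need a linear crossover gadget: from $a$ above $b$, form $m=a-b$, then $a-m=b$ above $b+m=a$. With this, every rational matrix is realised exactly, in particular those in the Siegelmann--Sontag network. Your Step~3 (saturated-linear $\sigma$, recurrence by periodic unrolling on the infinite field) then goes through.

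\textbf{Comparison with the paper.} The paper's proof does not contain this embedding either. It uses the same all-ones column layout and concludes only that static hyperplane VNNs are MLPs with Toeplitz-structured matrices. That is the easy containment of VNNs in MLPs, which your plan omits but which ``equivalent'' also requires. It then appeals to Turing completeness of transformers, but universality does not pass from a larger class to a subclass. Your recurrent-network route is the sounder one, but only after Step~1 is rebuilt along these lines.
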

\begin{proof}
	We can view also this from the traditional deep learning formulation point of view using the matrix-vector form of an \ac{MLP}. Consider the \ac{VNN} \ac{MLP} with the depth along the row dimension (left to right) within the Chua field $C$ that consists inputs as a vector of length 8, 1 hidden layer of length 4 and 1 output layer of length 2 (see Figure~\ref{fig::Correspondence}). Thus, we have a hyperplane $d=2$ form of \acp{VNN}, where each of the layers take the form of vectors of dimension $d-1$. We could place the input vector in 8 adjacent cells along one column as a column vector with all these 8 cells having decision Codd states $s=1$. Then a column for signal neurons to hold the connections between inputs as the connection weights with $s=0$ in-between the input layer and the subsequent hidden layer of 4 cells of decision Codd states $s=1$. Likewise, this is followed by a column for signal neurons creating another signal layer and then the output layer cell. The forward pass is then given by the element-wise multiplication of the corresponding column from the tensor of weights $\bm{W}$ with the resulting convolution of the Green's function consisting of all ones with the input and the signal layers (see Figure~\ref{fig::forward} and~\ref{fig::backward}). 
	Because of this correspondence with deep learning, whose most general architectures such as the transformer are known to be Turing complete~\citep{perez_turing_2018}, the \Neurocell~ framework is also Turing complete because the multiplications and convolutions can be represented equivalently as a matrix that results in a toeplitz matrix (which is a circulant matrix if using circular convolutions and block toeplitz matrices for minibatches), and thus corresponds to the usual deep learning formulation using matrix-vector forms. In other words, the static hyperplane \acp{VNN} are a special case of traditional deep learning \acp{MLP}, where the matrix multiplications take the form of toeplitz matrices.
\end{proof}

\section{Data-driven Green's Functions}\label{sec::Greens}
In this section, we show the various Green's functions learned from the models covered in this work. Figures~\ref{fig::GreensALU} and~\ref{fig::GreensALU_TM} shows the 16 Green's functions found for the \ac{ALU} models presented in Figure~\ref{fig::Results_ALU} (middle and right).
\begin{figure}[ht!]
	\centering
	\includegraphics[width=\textwidth]{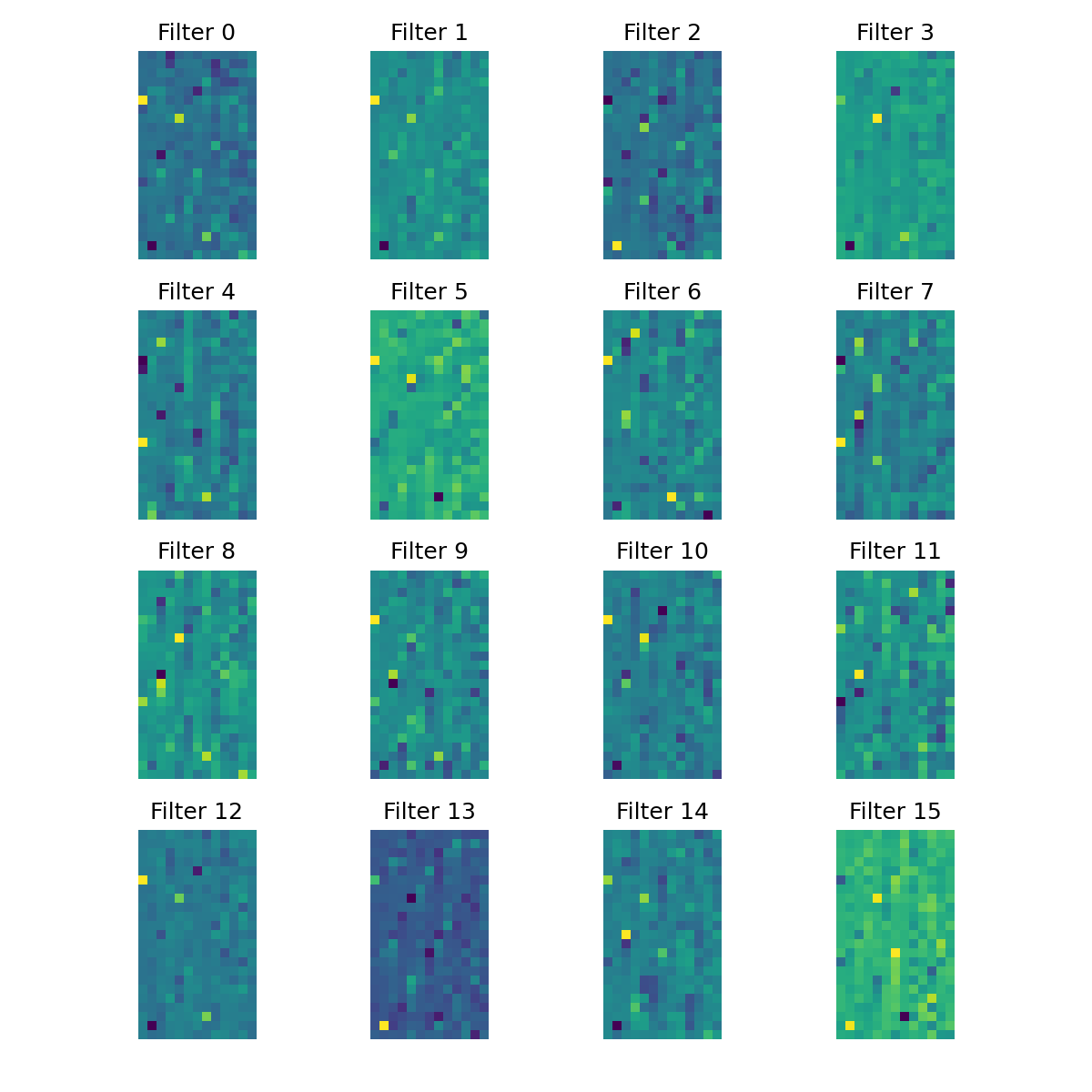}
	\caption{The resulting Green's functions learned for the box \ac{ALU} model from Figure~\ref{fig::Results_ALU} (middle).}
	\label{fig::GreensALU}
\end{figure}
\begin{figure}[ht!]
	\centering
	\includegraphics[width=\textwidth]{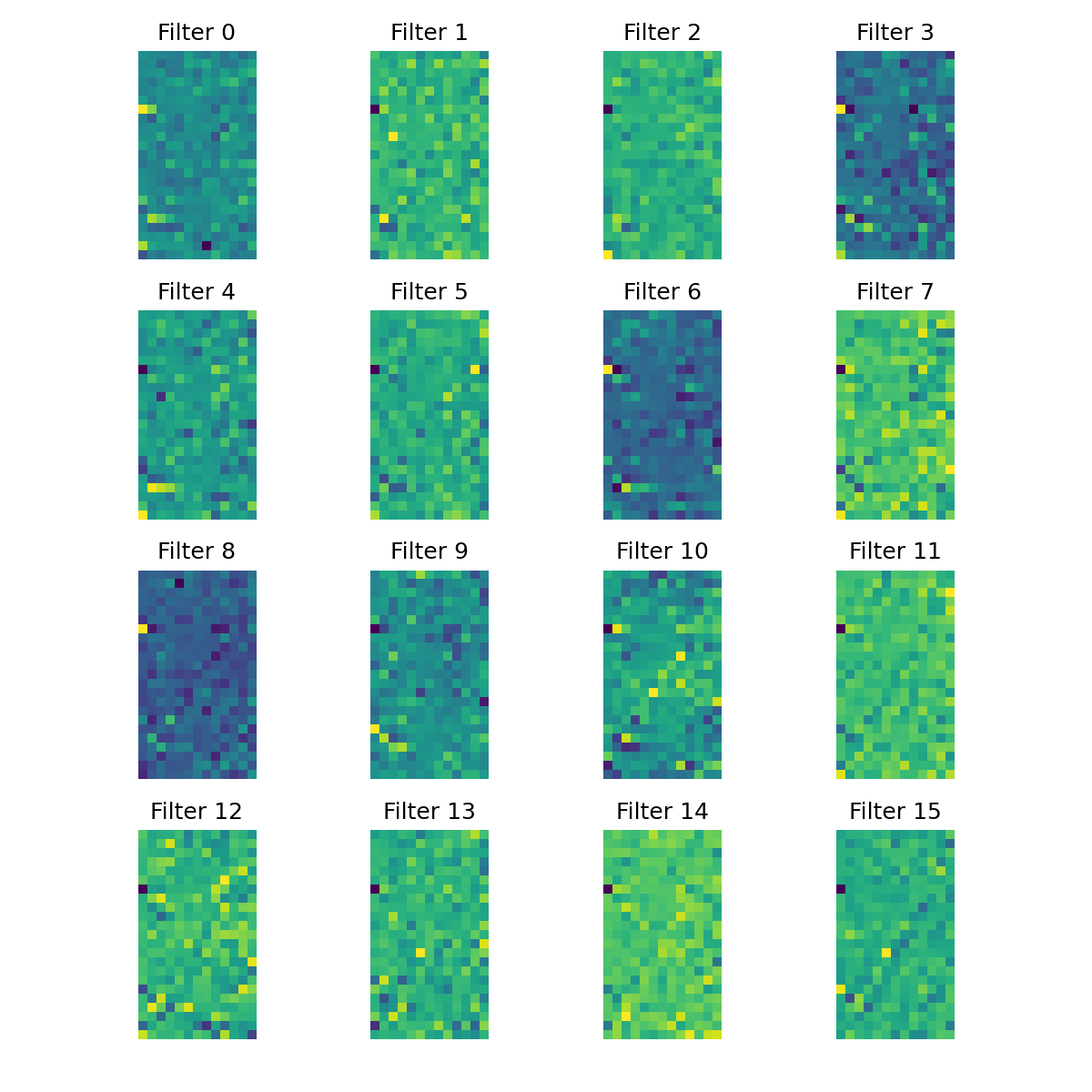}
	\caption{The resulting Green's functions learned for the tape-like \ac{ALU} model from Figure~\ref{fig::Results_ALU} (right).}
	\label{fig::GreensALU_TM}
\end{figure}
Figure~\ref{fig::CLP} shows the resulting 3 Green's functions from the \ac{CLP} model for simple classification of the Wine dataset~\citep{aeberhard_comparative_1994} provided within Scikit-Learn~\citep{scikit-learn} from Figure~\ref{fig::Results_ALU} (left). The functions show a similar symmetry to the original half-circular shape of the input.
\begin{figure}[ht!]
	\centering
	\includegraphics[width=0.6\textwidth]{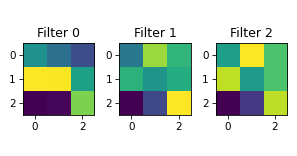}
	\caption{The resulting Green's functions learned for the \ac{CLP} model from Figure~\ref{fig::Results_ALU} (left).}
	\label{fig::CLP}
\end{figure}

\section{Computation}\label{sec::TM}
[The following are excerpts from notes developed by the author for a course taught on a similar topic]

Imagine that you could reduce all known computations, anything you know a computer can do and even arithmetic, down to just a few fundamental operations of reading, writing and moving symbols around on pieces of paper and nothing more. It would be simple enough to even solve any computation by hand given enough patience. With such a simple machine, we could not only study computation and prove its properties, but also help design a physical realisation of such a computation machine. Alan Turing~\citep{turing_computable_1937} introduced just such a machine in order to study the ``Entscheidungsproblem'' (or the decision problem) and in the process developed a theory of computation. These machines are known as \acfp{TM} in his honour and eventually led to the design the first digital computers by John von Neumann and others.

In the remaining sections of this chapter, we will construct these machines and design programs for them. In order to do this, we first investigate simpler machines that are possible, which do not compute everything, but are still useful for very simple specialised devices. These machines are devices that do not have any memory.

\subsection{Finite State Machines}
Consider having to design a device or system that only needs to decide whether to accept an input or reject it. Equivalently, you would like a system that produces a yes/on or no/off decision. In such a scenario, a machine that has a finite number of states and one that processes the input one element at a time is appropriate. This type of device that has no memory (in the sense of a tape or array) and is called a \acf{FSM}. First, we need to define the symbols that we will use for the input and the computation to apply on. The set of all possible unique symbols or characters is called the alphabet $\set{A}$. The input is then a string of symbols concatenated together from the alphabet. 

Formally, each \ac{FSM} firstly consists of pre-defined set of states $\set{S}=\{s_1,s_2,\ldots,s_n\}$ for number of states $n$, which is used during the computation. This includes three important states that all \acp{FSM} have: an initial state $s_0$, an accept state $s_{\textrm{yes}}$ and a reject state $s_{\textrm{no}}$. Secondly, we need the set of transitions or instructions $\delta$ for the \ac{FSM} that defines what state the machine should go into when in a particular state and it reads a certain character from the input string. In notation, we mean that $\delta : \set{S}\times\set{A}\to \set{S}$, i.e. state paired with character to another state.

\subsection{Turing Machines}
We saw that \acp{FSM} have an alphabet $\set{A}$, associated states $\set{S}$ and transition rules $\delta:\set{S}\times\set{A} \to \set{S}$. It does not however, have memory for working as a human would need to do arbitrary computations. This would also require keeping track of the ``awareness'' within that memory. Thus, we define a more powerful machine that has a tape for memory and a read/write head for keeping track of awareness, which is known as a \ac{TM}. The tape is a \ac{1D} array that can be thought of as sequential access memory. The read/write head moves left or right along the array and points to a cell in the tape that corresponds to the current computation taking place. The tape may hold the input and output strings, any intermediate results or working and even the program as we shall see later in this section. Figure~\ref{fig::TuringMachine} shows an illustration of the tape and head. 
\begin{figure}[htb]
	\centering
	\includegraphics[width=0.3\textwidth]{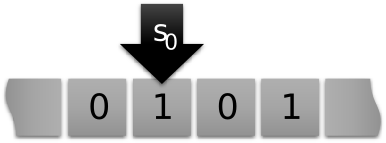}
	\caption{An illustration of a Turing machine with its tape and read/write head. In theory, the tape extends to infinity on both sides.}
	\label{fig::TuringMachine}
\end{figure} 

For theoretical reasons, we may consider the tape to be infinite in length. Unlike \acp{FSM}, the head also permits us to back along the input and mark on the tape. In fact, we are permitted to write any symbol we like as long as we define a tape alphabet $\set{A}_T$ that is a superset of symbols permitted on the tape with the input alphabet $\set{A}_i$. 

With memory available for working and ability to read/write and move the head, we have more degrees of freedom with our transitions. Indeed, now our transitions have the form $\delta: \set{S}\times\set{A}_T \to \set{S}\times\set{A}_T\times\{L,R\}$, where $L$ and $R$ represent the left and right movement of the head. Finally, the Turing machine also has an initial state $s_0$ like an \ac{FSM}, but only has single accept and reject states, $s_a$ and $s_r$ respectively, because we have the ability to complete the computation instantly without having to process the entire input string since we are free to move back and forth along the input. Thus, we can formally define the concept of a Turing machine. 
\begin{definition}[Turing Machine]\label{def::TuringMachine}
	A \acl{TM} is a seven tuple $T=(\set{S},\set{A}_i,\set{A}_T,\delta,s_0, s_a, s_r)$, where the sets $\set{S},\set{A}_i$ and $\set{A}_T$ are all finite and
	\begin{enumerate}
		\item $\set{S}$ is the set of all possible states,
		\item $\set{A}_i$ is the input alphabet without the blank symbol,
		\item $\set{A}_T$ is the tape alphabet which includes the blank symbol,
		\item $\delta: \set{S}\times\set{A}_T \to \set{S}\times\set{A}_T\times\{L,R\}$ is the transition function,
		\item $s_0\in \set{S}$ is the initial state,
		\item $s_a\in \set{S}$ is the accept state,
		\item $s_r\in \set{S}$ is the reject state with $s_r \neq s_a$.
	\end{enumerate}
\end{definition}

The Turing machine would function as the following:
\begin{enumerate}
	\item We assume that the tape is infinite in both directions.
	\item Begin with the input placed in the middle of the tape with the head in the left most cell of the input. The rest of the tape is considered empty or blank. 
	\item The input alphabet $\set{A}_i$ does not have the blank symbol, so it is safe to assume that the end of the input is marked with a blank symbol.
	\item The computation begins by using the tape alphabet $\set{A}_T$ and following the transition function until either the accept or reject state is reached. 
	\item When either of $s_a$ or $s_r$ is reached, the machine halts. Otherwise, the machine continues forever without halting.
\end{enumerate}

\subsection{Equivalences}
Firstly, if we add higher dimensional tape, such as \ac{2D} or \ac{3D} tape, to increase the versatility of our memory. It is easy to show that \ac{2D} or \ac{3D} arrays are equivalent to \ac{1D} arrays. and in fact, computers have been using \ac{1D} arrays to represent multi-dimensional arrays very early on in the history of computers. This is because we can always use memory addressing tricks to simulate multi-dimensional arrays using \ac{1D} coordinates. 

Secondly, we could use multiple heads to improve the speed and support more instructions across the tape and transition function. However, this simply means that we can process instructions faster, but no more improvement of the computational power of the machine. In a similar argument to simulating multi-dimensional arrays for memory with a \ac{1D} array, we can do the same with a single read/write head. The single head can complete each of the operations required for other heads one after the other sequential until all the operations of all heads are complete.

\section{Game of Life}\label{sec::GoL}
The \acf{GoL} was introduced by John Conway~\citep{gardner_mathematical_1970} as the culmination of cellular automata research in which the rules have been reduced to a few simple rules. The game involves a set of cells within an $N\times N$ grid whose state is either alive or dead (i.e. 1 or 0 respectively). The grid effectively represents the ‘universe’ that will be simulated and the alive cells the life within it. The game is governed by a set of simple rules that dictate the next state of each cell in this universe depending on its current state and the states of its neighbours. The rules of \ac{GoL} depend on the 8-connected neighbours of a cell as follows:
\begin{enumerate}
	\item Underpopulation: A live cell that has < 2 live neighbouring cells will die.
	\item Survival: A live cell that has 2-3 live neighbouring cells will remain alive.
	\item Overpopulation: A live cell with more than 3 live neighbours will die.
	\item Reproduction: A dead cell with exactly 3 live neighbours will become alive.
\end{enumerate}
The game begins with an initial state of the universe with a pattern of live cells. The universe is evolved by applying the above rules to each cell of the universe to determine the next iteration of the simulation. The evolution of the universe is observed by continual computing of the next iteration of the universe. 

\section{Abbreviations}
\begin{acronym}[GCD]
	\acro{1D}{one dimensional}
	\acro{2D}{two dimensional}
	\acro{3D}{three dimensional}
	\acro{PDE}{partial differential equation}
	\acro{PSF}{point spread function}
	\acro{DFT}{discrete Fourier transform}
	\acro{CeNN}{Cellular Neural Network}
	\acro{CNN}{Convolutional Neural Network}
	\acro{VNN}{Von Neumann Network}
	\acro{GCD}{greatest common divisor}
	\acro{ACE}{automatic computing engine}
	\acro{FSM}{finite state machine}
	\acro{TM}{Turing machine}
	\acro{UTM}{universal Turing machine}
	\acro{CM}{cellular machine}
	\acro{UCM}{universal cellular machine}
	\acro{CPU}{central processing unit}
	\acro{GPU}{graphical processing unit}
	\acro{ALU}{arithmetic logic unit}
	\acro{SoC}{System on Chip}
	\acro{CA}{cellular automata}
	\acro{GoL}{Game of Life}
	\acro{QM}{quantum mechanics}
	\acro{MLP}{multi-layered perceptron}
	\acro{CLP}{circular layered perceptron}
	\acro{NAS}{neural architecture search}
	\acro{MMA}{massive multi-agent}
\end{acronym}


\end{document}